\DeclareMathOperator\T{\mathsf{T}}
\DeclareMathOperator\E{\mathsf{E}}
\DeclareMathOperator\var{\mathsf{Var}}
\DeclareMathOperator*{\argmax}{arg\,max}
\newtheorem{theorem}{Theorem}
\newtheorem{lemma}{Lemma}
\begin{document}
%
\title{Localization Algorithm with Circular Representation in 2D and its Similarity to Mammalian Brains}
%
%
%

\author{Tsang-Kai Chang, Shengkang Chen,
        and Ankur Mehta
\thanks{The authors are with the Department
of Electrical and Computer Engineering, University of California, Los Angeles,
CA, 90095 USA. e-mail: \{tsangkaichang, billyskc, mehtank\}@ucla.edu}
}

%
%

\markboth{IEEE Robotics and Automation Letters}%
{Chang \MakeLowercase{\textit{et al.}}: Bare Demo of IEEEtran.cls for IEEE Journals}
%



\maketitle

\begin{abstract}
   Extended Kalman filter (EKF) does not guarantee consistent mean and covariance under linearization, even though it is the main framework for robotic localization. While Lie group improves the modeling of the state space in localization, the EKF on Lie group still relies on the arbitrary Gaussian assumption in face of nonlinear models. We instead use von Mises filter for orientation estimation together with the conventional Kalman filter for position estimation, and thus we are able to characterize the first two moments of the state estimates. Since the proposed algorithm holds a solid probabilistic basis, it is fundamentally relieved from the inconsistency problem. Furthermore, we extend the localization algorithm to fully circular representation even for position, which is similar to grid patterns found in mammalian brains and in recurrent neural networks. The applicability of the proposed algorithms is substantiated not only by strong mathematical foundation but also by the comparison against other common localization methods.
\end{abstract}


\section{Introduction}

   Localization enables robotic autonomy, in which robots realize their own spatial state from both proprioceptive and exteroceptive information in order to accomplish high-level tasks. Since noise is ubiquitous in obtained information, localization depends on estimation algorithms to separate information from noise. Among all estimation algorithms, extended Kalman filter (EKF) plays a critical role in localization, even in the realization of simultaneous localization and mapping (SLAM) \cite{cadena_past_2016}.
   Nevertheless, since time propagation and observation models are inevitably nonlinear in localization, the Gaussian assumption of the state estimates in EKF is definitely violated. It is also well-known that linearization in EKF results in inconsistency problem \cite{julier_counter_2001, bailey_consistency_2006, huang_convergence_2007}. In short, the first two moments from EKF are not necessarily equal to those of the output estimates, let alone the output distribution. Moreover, without knowing the ground truth values, the evaluation of Jacobian at the estimated values as approximation in EKF only worsens the estimation credibility.

   The major breakthrough of the inconsistency problem comes from Lie groups, with thorough treatment in   \cite{brian_c_hall_lie_2015}. The spatial state of localization in 2D, including position and orientation, is better described by the $SE(2)$ Lie group rather than  Euclidean space. Consequently, the observers in Lie group are designed, especially the extended Kalman filter on Lie group (LG-EKF) \cite{bourmaud_discrete_2013}. LG-KEF has been applied in deterministic nonlinear observer \cite{barrau_invariant_2017}, and also in SLAM \cite{zhang_convergence_2017, brossard_invariant_2018}. LG-EKF relies on concentrated Gaussian distribution, which is defined on the associated Lie algebra with small variation. However, the exact distribution on the Lie group, where the spatial state really lies, is unclear, which hinders direct application. In addition, the assumption of Gaussian distribution on Lie algebra can hardly be maintained due to the nonlinear time propagation and observation models, and thus the small variation assumption should always be ensured for applicability.
   
   In this paper, we construct a localization algorithm with mixture state representation, which consists of circular representation for orientation and Euclidean one for position. By applying von Mises filter for orientation estimate, we are no longer limited to the Gaussian state estimates, and can at least calculate the first two moments of the state estimates in face of nonlinear time propagation and observation models. In particular, we use both Kalman filter and von Mise filter jointly in the localization algorithm. Moreover, apart from the evaluation of Jacobian in EKF, no ground truth value is needed in the proposed algorithm.

   Prior to the application in robotics, circular representation already appears in mammals, who perform SLAM continually and successfully. With decades of investigation in neuroscience, several neurons are found with spatial selectivity, including place cells, head direction cells and grid cells. In particular, grid cells spike when the animal is at points arranged in a hexagonal grid pattern \cite{hafting_microstructure_2005, mcnaughton_path_2006}. The grid pattern also emerges in the artificial neurons of recurrent networks conducting navigation tasks \cite{banino_vector_based_2018}. Since the grid pattern is context-independent, it represents the universal spatial structure. While head direction cells encode orientation information in circular form, grid cells with various spatial scales together provide position information, also in circular form. Therefore, we can establish a localization algorithm with fully circular representation, which is similar to mammalian brains. 
   In \cite{milford_mapping_2008}, an SLAM architecture inspired by grid cells is proposed. However, no proprioceptive information is integrated in this system, which deviates from the grid cell mechanism. Instead, the proposed algorithms in this paper are derived along with faithful neuroscientific understanding, in order to provide a unified principle for localization.

   The contributions of this paper include:
   \begin{itemize}
      \item the localization algorithm with mixture representation which preserves the first two moments of estimates,
      \item the extension to localization algorithm with fully circular representation as in mammalian brains, and
      \item the discussion of potential investigation direction between robotics and neuroscience.
   \end{itemize}

   This paper is organized as follows: The background of circular distribution and filtering is presented in Section II. The localization algorithm with mixture representation is presented in Section III. In Section IV, the spatial representation in mammalian brains in reviewed together with the proposal of the localization algorithm with fully circular representation. Simulation examples are given in Section V. We discuss the results of this paper in Section VI, and conclude this paper in the last section.

\section{Circular Distribution and Filtering}

   There are two widely used circular distributions applied in circular filtering: von Mises distribution and wrapped normal distribution \cite{mardia_directional_1999}. The circular filter based on von Mises distribution, or simply von Mises filter, was developed in \cite{azmani_recursive_2009}, and further extended to distribution for higher dimensional spheres in \cite{markovic_multitarget_2016}. As for wrapped normal distribution, the corresponding filter is summarized in \cite{kurz_recursive_2016}. In addition, the intimate relationship between two distribution in filtering problem is highlighted in \cite{kurz_recursive_2016}. 
   
   In this paper, we focus on von Mises formulation since von Mises distribution is closed under multiplication, which is the necessary operation for observation update in filter. Even though von Mises distribution is not closed under circular convolution, the approximate time update in von Mises filter can preserve the trigonometric moments \cite{kurz_recursive_2016}. We summarize von Mises distribution and filter in the following for completeness, and add one theorem to substantiate the effect of time update.

\subsection{von Mises Distribution}

   The von Mises distribution, denoted by $vM(\mu,\kappa), \kappa > 0$, has probability density function
   \begin{equation}
      g(\theta;\mu,\kappa)=\frac{1}{2\pi I_0(\kappa)} e^{\kappa\cos(\theta-\mu)},\quad  0 \leq \theta < 2\pi,
   \label{eq:von_mises_pdf}
   \end{equation}
   where $I_p$ is the modified Bessel function of the first kind and order $p$, which can be defined by
   \begin{equation}
      I_p(\kappa) = \frac{1}{2\pi} \int_0^{2\pi}\cos( p\theta) e^{\kappa \cos \theta} d\theta.
   \end{equation}
   In (\ref{eq:von_mises_pdf}), $\mu$ is the mean direction and $\kappa$ is known as the concentration parameter. With sufficiently large $\kappa$, the von Mises distribution $vM(\mu,\kappa)$ resembles the Gaussian distribution with mean $\mu$ and variance $\frac{1}{\kappa}$, denoted by $N(\mu,\frac{1}{\kappa})$. This approximation will be used later as the connection between Gaussian and von Mises distributions.
   
   The connection between von Mises and Gaussian distributions also arises in conditioning, where the von Mises can be generated by conditioning on bivariate Gaussian distribution.
   \begin{theorem}
      Let $v$ be bivariate random vector with mean $d[\cos\phi, \sin\phi]^{\T}$ and covariance $\sigma^2 I_2$, where $I_n$ is a $n\times n$ identity matrix. With the expression $v=r[\cos\theta, \sin\theta]^{\T}$, the conditional probability of $\theta$ given $r=r_0$ is $vM(\phi, r_0 d/\sigma^2)$.
      \label{theorem:vonMises_generate}
   \end{theorem}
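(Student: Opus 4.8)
The plan is to obtain the conditional law of $\theta$ by a direct change of variables from Cartesian to polar coordinates in the bivariate Gaussian density of $v$, and then to recognize that the angular factor has exactly the von Mises form in (\ref{eq:von_mises_pdf}). First I would write the joint density of $v=(v_1,v_2)^{\T}$; because the covariance is the isotropic $\sigma^2 I_2$, this is
\[
   f(v_1,v_2)=\frac{1}{2\pi\sigma^2}\exp\!\left(-\frac{(v_1-d\cos\phi)^2+(v_2-d\sin\phi)^2}{2\sigma^2}\right).
\]
I would then substitute $v_1=r\cos\theta$ and $v_2=r\sin\theta$, whose Jacobian is $r$, to get the joint density of the pair $(r,\theta)$.

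The key algebraic step is to expand the squared distance in the exponent. Using the identity $\cos\theta\cos\phi+\sin\theta\sin\phi=\cos(\theta-\phi)$, the numerator collapses to $r^2-2rd\cos(\theta-\phi)+d^2$, so that
\[
   f(r,\theta)=\frac{r}{2\pi\sigma^2}\exp\!\left(-\frac{r^2-2rd\cos(\theta-\phi)+d^2}{2\sigma^2}\right).
\]
This single identity is really the heart of the result: the Gaussian cross term is precisely what produces a cosine of an angular difference, which is the defining feature of the von Mises density.

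Finally I would form the conditional density as $f(\theta\mid r=r_0)=f(r_0,\theta)\big/\int_0^{2\pi} f(r_0,\theta')\,d\theta'$. Every factor that does not depend on $\theta$, namely the prefactor $r_0/(2\pi\sigma^2)$ together with $\exp(-r_0^2/(2\sigma^2))$ and $\exp(-d^2/(2\sigma^2))$, cancels between numerator and normalizer, leaving a density proportional to $\exp\!\big(\tfrac{r_0 d}{\sigma^2}\cos(\theta-\phi)\big)$. Matching this against (\ref{eq:von_mises_pdf}) identifies the mean direction as $\phi$ and the concentration parameter as $\kappa=r_0 d/\sigma^2$; the normalizing constant is then forced to equal $1/(2\pi I_0(\kappa))$ since the von Mises density is already normalized. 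Note that $\kappa>0$ holds automatically because $r_0,d,\sigma^2>0$, so the resulting distribution is indeed a valid $vM(\phi, r_0 d/\sigma^2)$.

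I do not anticipate a substantial obstacle here: the argument is a routine conditioning computation once the polar change of variables is in place. The only points requiring care are the trigonometric collapse of the cross term and the observation that the $\theta$-independent Jacobian factor $r_0$, despite scaling the joint density, plays no role in the conditional distribution over the angle and merely contributes to the concentration through the fixed value $r_0$.
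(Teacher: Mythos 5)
Your derivation is correct: the polar change of variables with Jacobian $r$, the collapse of the Gaussian exponent to $r^2 - 2rd\cos(\theta-\phi) + d^2$, and the cancellation of all $\theta$-independent factors upon conditioning on $r=r_0$ yield exactly the density proportional to $\exp\bigl(\tfrac{r_0 d}{\sigma^2}\cos(\theta-\phi)\bigr)$, i.e.\ $vM(\phi, r_0 d/\sigma^2)$. Note that the paper itself states Theorem \ref{theorem:vonMises_generate} without proof (it is a classical fact from directional statistics), so there is no in-paper argument to compare against; your conditioning computation is the standard derivation and fills that gap soundly, with the only implicit assumptions being $d>0$ and $r_0>0$ so that the concentration parameter is strictly positive as required by the definition in (\ref{eq:von_mises_pdf}).
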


   As von Mises random variable often appears in trigonometric functions, the property of the trigonometric moment for von Mises distribution is essential, and is summarized in the following lemma.
   \begin{lemma}[\cite{kurz_recursive_2016}]
   For random variable $\theta$ following $vM(\mu,\kappa)$, the $n$th trigonometric moments is given by
   \begin{equation}
      \E[\exp(i n \theta)] =  \exp(i n \mu) \frac{I_{n}(\kappa)}{I_0(\kappa)}.
   \end{equation}
   \label{lemma:vM_moment}
   \end{lemma}

\subsection{von Mises Filter}

   \begin{figure}[t]
      \centering
      \includegraphics[width=0.4\textwidth]{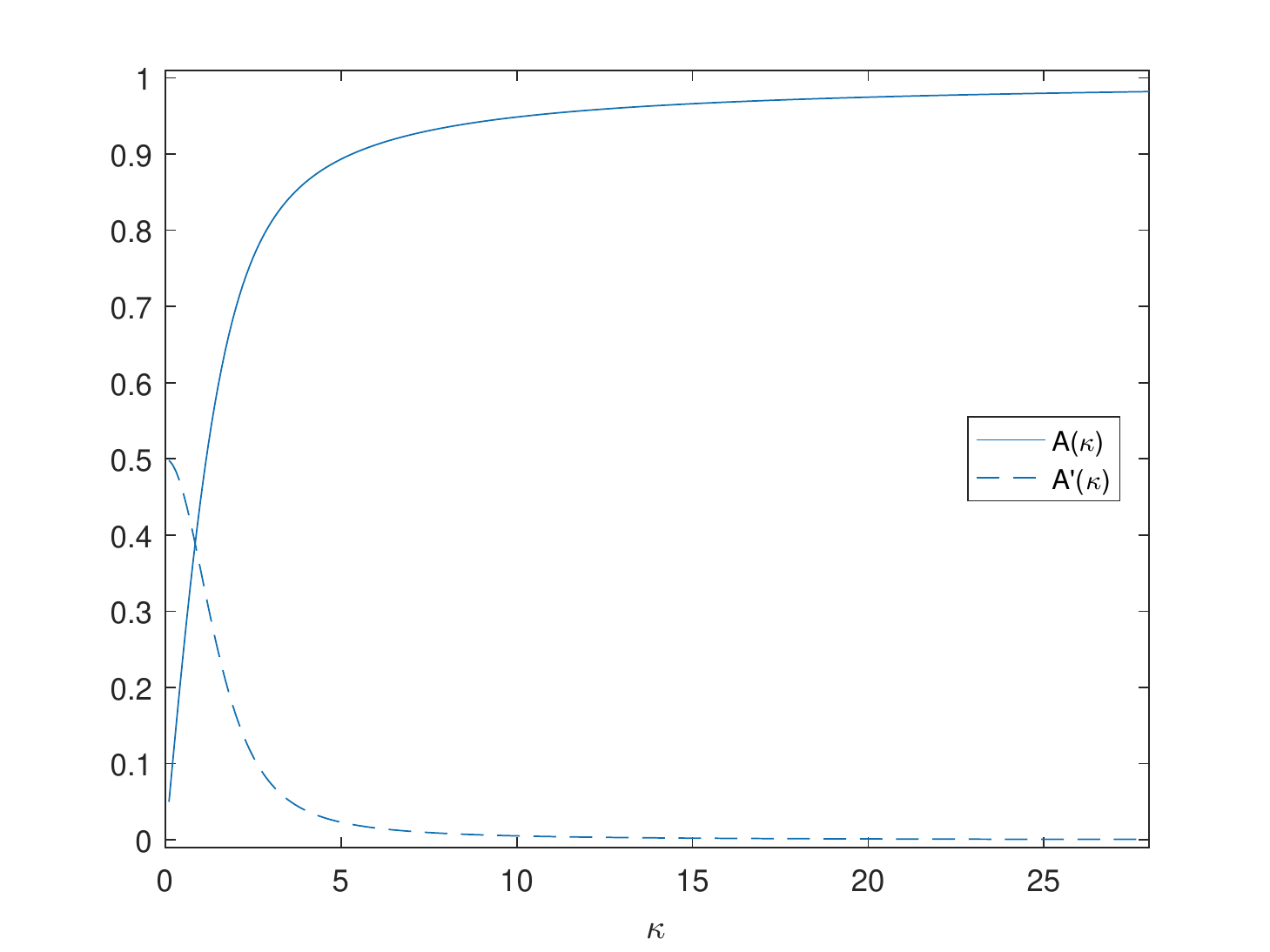}
      \caption{The plot of $A(\kappa)$ and $A'(\kappa)$ defined in (\ref{eq:a_func}).}
      \label{fig:a_func}
   \end{figure}   
   
   We consider a discrete-time system with time index $t$ to describe the evolution of circular state $\theta_t \in [0,2\pi)$ as
   \begin{equation}
      \theta_{t+1} = \theta_t + u_t + w_t,
      \label{eq:time_model}
   \end{equation}
   where $u_t$ is the input at time $t$ and $w_t$ is the process noise, which is independent of the state and is modeled by $vM(0,\kappa_w)$. 
   To estimate the state, the state $\theta_t$ can be observed by the observation model
   \begin{equation}
      o_t = \theta_t + \nu_t,
      \label{eq:observation_model}
   \end{equation}
   where $\nu_t$ is the observation noise modeled by $vM(0,\kappa_\nu)$. Based on the principle in Kalman filter, we can use $\hat{\theta}_t$, modeled by $vM(\mu_t,\kappa_t)$, to estimate $\theta_t$, and update $\hat{\theta}_t$ recursively by the input $u_t$ and observation $o_t$. 
   
   For time update, given the current estimator $\hat{\theta}_t$, the updated estimator $\hat{\theta}_{t+1}$ is the sum of two independent von Mises random variables. Let $\theta_1$ and $\theta_2$ be independently distributed as $vM(\mu_1,\kappa_1)$ and $vM(\mu_2,\kappa_2)$, respectively. The probability density function of $\theta=\theta_1+\theta_2$ is given by
   \[
      h(\theta) = \frac{1}{4\pi^2 I_0(\kappa_1) I_0(\kappa_2)} \int_0^{2\pi} e^{\kappa_1 \cos(\xi-\mu_1)+\kappa_2 \cos(\theta -\xi-\mu_2)} d\xi,
   \]
   which is not a von Mises distribution. However, $h(\theta)$ can be approximated by a von Mises distribution $vM(\mu_1+\mu_2), A^{-1}(A(\kappa_1)A(\kappa_2))$, where
   \begin{equation}
      A(\kappa) = \frac{I_1(\kappa)}{I_0(\kappa)}.
      \label{eq:a_func}
   \end{equation}      
   Figure \ref{fig:a_func} depicts the numerical value of $A(\kappa)$.
   The approximation, appearing in \cite{mardia_directional_1999}, was used in \cite{azmani_recursive_2009} for the development of von Mises filter. Later in \cite{kurz_recursive_2016}, the authors present that the time update of circular filtering can be done based on trigonometric moments only, without specifying a particular circular distribution, which finally validates the approximation theoretically. Consequently, $\hat{\theta}_{t+1}$ can be approximated by the distribution $vM(\mu_{t+1},\kappa_{t+1})$, with $\mu_{t+1}=\mu_t+u$ and $\kappa_{t+1}=A^{-1}(A(\kappa_t)A(\kappa_w))$.

   In observation update, the posterior probability density of the state estimate given the observation $o_t$ is given by
   \[
      p(\theta_t =\theta | o_t) = \frac{p(o_t | \theta_t = \theta) p(\theta_t=\theta | \theta_{t})}{p(o_t | \theta_{t})},
   \]
   where $p(o_t | \theta_t = \theta)$ follows $vM(o_t-\theta,\kappa_\nu)$ and $p(\theta_t=\theta | \theta_{t})$ follows $vM(\mu_{t}, \kappa_{t})$.  With the multiplication of von Mises distribution in the nominator, we have
   \[
      p(\theta_t =\theta | o_t) \propto e^{\kappa_\nu \cos(o_t-\theta) + \kappa_{t} \cos(\theta-\mu_{t})}= e^{\kappa_{t^+} \cos(\theta - \mu_{t^+})},
   \]
   where
   \begin{align*}
      \mu_{t^+} &= \arg\left( \kappa_\nu e^{i\,o_t}+ \kappa_{t} e^{i\,\mu_{t}} \right),\\
      \kappa_{t^+} &= \kappa_\nu \cos(\mu_{t^+}-o_t)+ \kappa_{t} \cos(\mu_{t^+}-\mu_{t}).
   \end{align*}
   Therefore, the posterior distribution $p(\theta_t = \theta | o_t)$ is still a von Mises distribution, denoted by $vM(\mu_{t^+}, \kappa_{t^+})$, where the time index $t^+$ indicates the time instance after the observation.
   The overall estimation algorithm is summarized in Algorithm 1.

\begin{algorithm}[t]
  \caption{von Mises Filter}
  \begin{algorithmic}
    \State \textbf{Initialization}:
       Set $\mu_0$ and $\kappa_0$.
    \State \textbf{Time update}: $(\mu_{t+1},\kappa_{t+1})= \texttt{vMF\_Time}(\mu_{t},\kappa_{t};  u_t, \kappa_w)$
   \begin{align*}
      \mu_{t+1}&=\mu_{t} + u_t,\\ 
      \kappa_{t+1} &=A^{-1}(A(\kappa_t)A(\kappa_w)).
   \end{align*}    
   \State \textbf{Observation update}: \\$(\mu_{t^+},\kappa_{t^+})= \texttt{vMF\_Obsv}(\mu_{t},\kappa_{t};  o_t, \kappa_\nu)$
   \begin{align*}
      \mu_{t^+} &= \arg\left( \kappa_\nu e^{i\,o_t}+ \kappa_{t} e^{i\,\mu_{t}} \right),\\
      \kappa_{t^+} &= \kappa_\nu \cos(\mu_{t^+}-o_t)+ \kappa_{t^-} \cos(\mu_{t^+}-\mu_{t}).
   \end{align*}
  \end{algorithmic}
\end{algorithm}

\subsection{Properties of $A(\kappa)$}   

   In von Mises filter, while the approximation has to be applied to maintain the same distribution during the time update, the variation of the concentration parameter becomes obscure. Although such statement can be ensured with the introduction of intermediate wrapped normal distribution, we instead explain the effect on the concentration parameter by the following theorem.   
   \begin{theorem}
   For $\kappa_1,\kappa_2 > 0$,
   \begin{equation}
      A^{-1}\left(A(\kappa_1)A(\kappa_2)\right) < \min(\kappa_1,\kappa_2). \end{equation} \label{theorem:func_A}
   \end{theorem}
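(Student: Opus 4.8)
The plan is to exploit the monotonicity of $A$ so that the inequality can be rewritten without the inverse $A^{-1}$. Concretely, I would first record the two structural facts about $A(\kappa)=I_1(\kappa)/I_0(\kappa)$ that drive everything: that $A$ is strictly increasing on $(0,\infty)$, and that $0 < A(\kappa) < 1$ for every $\kappa > 0$. The first fact guarantees that $A$ is a bijection from $(0,\infty)$ onto $(0,1)$, so that $A^{-1}$ is well defined and itself strictly increasing; the second guarantees that the product $A(\kappa_1)A(\kappa_2)$ again lies in $(0,1)$, i.e.\ in the domain of $A^{-1}$. These two facts together make the statement meaningful and set up the reduction.

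With monotonicity in hand, I would apply the strictly increasing map $A$ to both sides of the desired inequality (equivalently, use that $A^{-1}$ is increasing). This converts $A^{-1}(A(\kappa_1)A(\kappa_2)) < \min(\kappa_1,\kappa_2)$ into the equivalent claim $A(\kappa_1)A(\kappa_2) < A(\min(\kappa_1,\kappa_2))$. By symmetry I may assume without loss of generality that $\kappa_1 \le \kappa_2$, so the right-hand side is $A(\kappa_1)$. Since $A(\kappa_1) > 0$, dividing through reduces the problem to the single clean inequality $A(\kappa_2) < 1$.

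The remaining step is to verify $A(\kappa) < 1$, i.e.\ $I_1(\kappa) < I_0(\kappa)$, for all $\kappa > 0$. I would read this directly off the integral definition of the Bessel functions quoted in the text: subtracting the two integrals gives
\[
   I_0(\kappa) - I_1(\kappa) = \frac{1}{2\pi}\int_0^{2\pi}\bigl(1 - \cos\theta\bigr)e^{\kappa\cos\theta}\,d\theta,
\]
whose integrand is nonnegative and strictly positive away from $\theta = 0$, so the integral is strictly positive. This yields $I_1(\kappa) < I_0(\kappa)$ and hence $A(\kappa) < 1$, closing the argument.

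The only real obstacle is justifying the strict monotonicity of $A$, since the whole reduction rests on $A$ (and thus $A^{-1}$) being increasing; the positivity step is essentially immediate from the integral representation. Monotonicity can be obtained either by citing the standard properties of the Bessel ratio or by differentiating, using the identity $A'(\kappa) = 1 - A(\kappa)/\kappa - A(\kappa)^2$ together with known bounds on $A$ to show $A'(\kappa) > 0$, which is consistent with the plot of $A'$ in Figure \ref{fig:a_func}. I would prefer to simply invoke it as a known fact to keep the proof short.
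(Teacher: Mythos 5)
Your proposal is correct and takes essentially the same route as the paper's proof: both use the strict monotonicity of $A$ (hence of $A^{-1}$) to reduce the claim to $A(\kappa_1)A(\kappa_2) < A(\min(\kappa_1,\kappa_2))$, which then follows from $0 < A(\kappa) < 1$. The only cosmetic difference is that you establish $A(\kappa)<1$ directly from the integral representation of $I_0$ and $I_1$, while the paper cites a Bessel-ratio inequality and justifies $A'(\kappa)>0$ via the same identity $A'(\kappa)=1-A(\kappa)\left(A(\kappa)+\tfrac{1}{\kappa}\right)$ that you mention.
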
 
    We present the proof of Theorem \ref{theorem:func_A} in Appendix I. Theorem \ref{theorem:func_A} states that the concentration parameter of the approximated von Mises distribution decreases after time update, which conforms to the fact that time update introduces estimation uncertainty. 
    However, different from time update, the concentration parameter does not necessarily increase after observation update.

\section{Localization Algorithm with Mixture Representation}

   The spatial state of an agent at time $t$ includes orientation $\theta_t$ and position $x_t,y_t$. We consider unicycle model where the evolution of state with input angular velocity $\omega_t$ and translational velocity $\nu_t$ is governed by
   \begin{align}
      \theta_{t+1} &= \theta_t + (\omega_t + n_{\omega,t}) \Delta t, \notag\\
      x_{t+1} &= x_t + (v_t + n_{v,t}) \cos\theta_t \Delta t, \notag\\
      y_{t+1} &= y_t + (v_t + n_{v,t}) \sin\theta_t \Delta t. \label{eq:l_time_model}
   \end{align}   
   In (\ref{eq:l_time_model}), $n_{\omega,t}$ and $n_{v,t}$ are the noise of angular velocity and that of translational velocity, and modeled by $N(0,\sigma_{\omega}^2)$ and $N(0,\sigma_v^2)$, respectively. The agent keeps the estimates $\hat{\theta}_t$, $\hat{x}_t$ and $\hat{y}_t$ to track the corresponding terms, where $\hat{\theta}_t$ is a von Mises random variable with distribution $vM(\bar{\theta}_t,\kappa_t)$ and $\hat{x}_{t}$, $\hat{y}_t$ follow $N(\bar{x}_t,\sigma_{x,t}^2)$ and $N(\bar{y}_t,\sigma_{y,t}^2)$, respectively.  
   
\subsection{Time Update}
   
   The time update for $\hat{\theta}_t$ is relatively easy, since we can approximate $n_{\omega,t} \Delta t$ by $vM(0,1/\sigma_{\omega}^2 \Delta t^2)$ and apply Algorithm 1 directly. However, in the time update of $\hat{x}_t$, the exact distribution of $(v_t + n_{v,t}) \cos\hat{\theta}_t$ is complicated for any practical application. Instead of craving the exact distribution, we instead characterize the first two moments of $(v_t + n_{v,t}) \cos\hat{\theta}_t$ for reasonable approximation.
   With the distribution of $\hat{\theta}_t$ denoted by $vM(\bar{\theta}_t, \kappa_t)$ and the properties of trigonometric moments of von Mises distributions in Lemma \ref{lemma:vM_moment}, we have
   \begin{align*}
      \E\left[(v_t + n_{v,t}) \cos\hat{\theta}_t\right] &= v_t \E[\cos\hat{\theta}_t] + \E[n_{v,t}] \E[\cos\hat{\theta}_t] \\
         &= v_t A(\kappa_t)\cos\bar{\theta}_t,
   \end{align*}
   and together with the law of total variance
   \begin{align*}
      \var\left[(v_t + n_{v,t}) \cos\hat{\theta}_t\right] &= \E[\sigma_v^2 \cos\hat{\theta}_t ] + \var(v_t \cos\hat{\theta}_t) \\
         &\leq \sigma_v^2 + v_t^2.
   \end{align*}
   Although the random variable $(v_t + n_{v,t}) \cos\hat{\theta}_t$ is not Gaussian, we proceed to approximate it as a Gaussian random variable with mean $v_t A(\kappa_t)\cos\bar{\theta}_t$ and variance assigned as the upper bound $\sigma_v^2 + v_t^2$ to compensate the approximation discrepancy. As a consequence, we rewrite the time update model as
   \begin{align}
      x_{t+1} &\approx x_t + \left[ v_t  A(\kappa_t)\cos\bar{\theta}_t + w_{x,t} \right] \Delta t, \notag\\
      y_{t+1} &\approx y_t + \left[ v_t  A(\kappa_t)\sin\bar{\theta}_t + w_{y,t} \right] \Delta t, \label{eq:l_time_model_appr}
   \end{align}  
   where both $w_{x,t}$ and $w_{y,t}$ follows $N(0,\sigma_v^2 + v_t^2)$. The position estimates can now be updated by traditional Kalman filter.
   
\subsection{Observation Update with Direct Measurements}
   
   As for observation update, we first consider the direct measurement model. This straightforward model serves as be the basis for further complicate situation. In direct measurement model, the additive noises occur as
   \begin{align}
      o_{\theta,t} &= \theta_t + \nu_{\theta,t}, \notag\\
      o_{x,t} &= x_t + \nu_{x,t}, \notag\\
      o_{y,t} &= y_t + \nu_{y,t}, \label{eq:d_observ_model}
   \end{align}  
   where the orientation observation noises $\nu_{\theta,t}$ is modeled by $vM(0,\kappa_{\nu_\theta})$ and the position observation noise $\nu_{x,t}$ by $N(0,\sigma^2_{o_x})$ and  $\nu_{y,t}$ by $N(0,\sigma^2_{o_y})$. The observation update for $\hat{\theta}_t$ then simply follows the corresponding equation in Algorithm 1, while that for $\hat{x}_t$ and $\hat{y}_t$ can be done by conventional Kalman filter.

\subsection{Observation Update with Bearing and Distance Measurements}

   For most localization systems in practice, agents can hardly observe their spatial states directly. To ensure the applicability, we investigate the localization algorithm relying on bearing and distance measurements. We assume that a landmark with known position $(x_l,y_l)$ is present in the environment, and the relative bearing $b_t$ and distance $r_t$ of the landmark at time $t$ are thus given by
   \begin{align}
      b_t &= \tan^{-1} \left(\frac{y_l - y_t}{x_l - x_t} \right) - \theta_t + \nu_{b,t}, \notag \\
      r_t &= \sqrt{(x_l - x_t)^2+(y_l - y_t)^2} + \nu_{r,t}.
   \end{align}
   For convenience, we model the bearing observation noise $\nu_{b,t}$ as $vM(0,\kappa_b)$ and the distance observation noise $\nu_{r,t}$ as $N(0,\sigma_r^2)$, respectively. 
   We denote the observed bearing and distance values as $s_{b,t}$ and $s_{r,t}$, respectively.
   
   We can reconstruct the spatial states $\theta, x, y$ from the bearing and distance observations, and leverage the aforementioned direct measurement model (\ref{eq:d_observ_model}) for update. In other words, we can construct the equivalent direct measurement given by
   \begin{align}
      o'_{\theta,t} &= \tan^{-1} \left(\frac{y_l - \hat{y}_t}{x_l - \hat{x}_t} \right) - b_t = \theta_t + \nu'_{\theta,t}, \notag \\
      o'_{x,t} &= x_l - r_t \cos(\hat{\theta}_t + b_t) = x_t + \nu'_{x,t}, \notag \\
      o'_{y,t} &= y_l - r_t \sin(\hat{\theta}_t + b_t) = y_t + \nu'_{y,t}.
      \label{eqs:observ}
   \end{align}
   In (\ref{eqs:observ}), the equivalent observation noises, $\nu'_{\theta,t}, \nu'_{x,t}$ and $\nu'_{y,t}$, not only come from the sensor noises $\nu_{b,t}, \nu_{r,t}$, but also from the state estimation uncertainty. As long as we can characterize those zero-mean equivalent observation noises, we can update the state estimates directly.
   

   For equivalent orientation measurement $o'_{\theta,t}$, its distribution can be characterized by analyzing $\tan^{-1} \left(\frac{y_l - \hat{y}_t}{x_l - \hat{x}_t} \right)$ and $b_t$. We know that $[x_l - \hat{x}_t,y_l - \hat{y}_t]^{\T}$ is a Gaussian random vector, and $\var(x_l - \hat{x}_t)=\var(y_l - \hat{y}_t)=\sigma^2_{x,t}$, since both estimates follow the same update equation with identical parameters in the algorithm. The distribution of $\tan^{-1} \left(\frac{y_l - \hat{y}_t}{x_l - \hat{x}_t} \right)$ is not easily determined since the covariance between $x_l - \hat{x}_t$ and $y_l - \hat{y}_t$ is not available. To circumvent this problem, we choose $2\sigma^2_{x,t}I_2$ as the nominal covariance of $[x_l - \hat{x}_t,y_l - \hat{y}_t]^{\T}$, which is guaranteed to be no smaller than the real covariance matrix in positive definite sense according to Lemma \ref{lemma:diagonal} in Appendix II. Therefore, with the nominal covariance, $ \tan^{-1} \left(\frac{y_l - \hat{y}_t}{x_l - \hat{x}_t} \right)$ follows $vM\left(\tan^{-1} \left(\frac{y_l - \bar{y}_t}{x_l - \bar{x}_t} \right), \frac{\bar{r}_t s_{r,t}}{2 \sigma^2_{x,t}} \right)$, where $\bar{r}_t=\sqrt{(x_l - \bar{x}_t)^2+(y_l - \bar{y}_t)^2}$, by Theorem \ref{theorem:vonMises_generate}. Consequently, since $\tan^{-1} \left(\frac{y_l - \hat{y}_t}{x_l - \hat{x}_t} \right)$ and $b_t$ are independent, the distribution of $\nu'_{\theta,t}$ can be well approximated by $vM\left(0, A\left(\frac{\bar{r}_t s_{r,t}}{2 \sigma^2_{x,t}}\right) A(\nu_b)\right)$. However, apart from the direct measurement model, the equivalent noise $\nu'_{\theta,t}$ is not independent of the orientation estimate $\hat{\theta}_t$, and applying direct observation update leads to over-confidence problem. Instead of applying observation update formula, we replace the orientation estimates by the equivalent orientation measurement, distributed by $vM\left(\tan^{-1} \left(\frac{y_l - \bar{y}_t}{x_l - \bar{x}_t} \right) - s_{b,t}, A\left(\frac{\bar{r}_t s_{r,t}}{2 \sigma^2_{x,t}}\right) A(\nu_b)\right)$.

\begin{algorithm}[t]
  \caption{Localization Algorithm of Mixture Representation with Bearing and Distance Measurements}
  \begin{algorithmic}
    \State \textbf{Initialization}
    \State\quad Set $\bar{\theta}_0, \kappa_t$ for orientation estimation.
    \State\quad Set $\bar{x}_0,\bar{y}_0, \sigma_{x,0}^2,\sigma_{y,0}^2$ for position estimation.
    
    \State \textbf{Time update}
    \State\quad input: odometry input $\omega_t, v_{t}$
    \[
       (\bar{\theta}_{t+1},\kappa_{t+1})= \texttt{vMF\_Time}\left(\bar{\theta}_{t},\kappa_{t};  \omega_t \Delta t, \frac{1}{ \sigma_{\omega}^2 \Delta t^2}\right).
    \]
    \begin{align*}
       (\bar{x}_{t+1}, \sigma_{x,t+1}^2) &= \texttt{KF\_Time}\big(\bar{x}_{t}, \sigma_{x,t}^2;\\
       &\quad\quad v_t \Delta t  A(\kappa_t) \cos\theta_t, (\sigma_v^2 + v_t^2)\Delta t ^2\big), \\
       (\bar{y}_{t+1}, \sigma_{y,t+1}^2) &= \texttt{KF\_Time}\big(\bar{y}_{t}, \sigma_{y,t}^2;\\
       &\quad\quad v_t \Delta t  A(\kappa_t) \cos\theta_t, (\sigma_v^2 + v_t^2)\Delta t ^2\big).
    \end{align*}

    \State \textbf{Observation update}
    \State\quad input: bearing measurement $s_{b,t}$ and distance measurement $s_{r,t}$
    \begin{align*}
       \bar{\theta}_{t^+} &= \tan^{-1} \left(\frac{y_l - \bar{y}_t}{x_l - \bar{x}_t} \right) - s_{b,t}, \\
       \kappa_{t^+} &= A\left(\frac{\bar{r}_t s_{r,t}}{2 \sigma^2_{x,t}}\right) A(\nu_b).
    \end{align*}
    \begin{align*}
       (\bar{x}_{t^+}, \sigma_{x,t^+}^{2}) &= \texttt{KF\_Obsv}\big(\bar{x}_{t}, \sigma_{x,t}^{2};   \\
       & x_l -s_{r,t}A(\kappa_t)A(\kappa_b)\cos(\bar{\theta}_t + s_{b,t}), \sigma_r^2 + s_{r,t}^2  \big), \\
       (\bar{y}_{t^+}, \sigma_{y,t^+}^{2}) &= \texttt{KF\_Obsv}\big(\bar{y}_{t}, \sigma_{y,t}^{2};   \\
       & y_l -s_{r,t}A(\kappa_t)A(\kappa_b)\sin(\bar{\theta}_t + s_{b,t}), \sigma_r^2 + s_{r,t}^2  \big).
    \end{align*}
    \\\hrulefill
    \State \textbf{Kalman filter}
    \State Time update: $(\bar{s}_{t+1}, \sigma^2_{t+1}) = \texttt{KF\_Time}(\bar{s}_{t}, \sigma^2_{t}; u_t, \sigma^2_{w})$
    \begin{align*}
       \bar{s}_{t+1} &= \bar{s}_{t} + u_t, \\
       \sigma^2_{t+1} &= \sigma^2_{t} + \sigma^2_{w}.
    \end{align*}
    \State Observation update: $(\bar{s}_{t^+}, \sigma^2_{t^+}) = \texttt{KF\_Obsv}(\bar{s}_{t}, \sigma^2_{t}; o_t, \sigma^2_{r})$
    \begin{align*}
       k_t &= \sigma^2_{t} \left( \sigma^2_{r} + \sigma^2_{t} \right)^{-1}, \\
       \bar{s}_{t^+} &= \bar{s}_{t} + k_t (o_t - \bar{s}_t), \\
       \sigma^2_{t^+} &= \left( \sigma^{-2}_{t} + \sigma^{-2}_{r} \right)^{-1}.
    \end{align*}
  \end{algorithmic}
\end{algorithm}

   As for the position update, the distribution of equivalent direction position measurements can be constructed similarly. For example, for the equivalent observation $o'_{x,t}$,
   \begin{align*}
      \E[o'_{x,t}] &= x_l - s_{r,t} A(\kappa_t)A(\kappa_b) \cos(\bar{\theta}_t + s_{b,t}), \\
      \var(o'_{x,t}) &\leq \sigma_r^2 + s_{r,t}^2.
   \end{align*}
   Consequently, the observation gives the condition that
   \[
      \hat{x}_{t^-} =  x_l - s_{r,t} A(\kappa_t)A(\kappa_b) \cos(\bar{\theta}_t + s_{b,t})
   \]
   accompanied with the noise $\nu'_{x,t}$ approximated by $N(0,\sigma_r^2 + s_{r,t}^2)$. Note that by taking the variance as $\sigma_r^2 + s_{r,t}^2$, the effect from the variance of orientation estimate $\hat{\theta}_t$ is compensated and vanished. As a consequence, the observation update based on direct measurement follows.
   The estimate $\hat{y}_t$ can be updated in the identical method. The entire algorithm is summarized in Algorithm 2.

   For the localization algorithms with bearing-and-distance measurement, the Jacobin is essential in EKF for linear approximation. But in real implementation the Jacobian can only be calculated with estimated values rather than the ground truth values. Therefore, the estimation error in EKF comes from both the linearization and the discrepancy between the estimated and ground truth values, which jointly undermines the localization performance. On the contrary, the proposed algorithms here rely on the approximation between von Mises and Gaussian distributions while maintaining the first two moments of the estimates.

\section{Grid Cells and Localization Algorithm with Fully Circular Representation}

   Even though the firing pattern of a single grid cell is periodic and does not give exact positional information, an ensemble of grid cells with different spatial scales does provide accurate position representation. As a result, circular representation of the spatial state is ubiquitous in mammalian brains, with head direction cells for orientation and grid cells for position. In this section, we introduce the current understanding of grid cells, and propose the corresponding localization algorithm based on fully circular representation.

\subsection{Spatial Representation by Grid Cells}
   
   Anatomically, proximate grid cells have the same spatial scale and orientation but different spatial phases, and those grid cells with identical spatial scale are organized as a module. According to \cite{mathis_multiscale_2013}, the collection of grid cells in the same module reduces the intrinsic representation uncertainty of each neuron, and signifies a single circular position of that module, as an example of population vector decoding \cite{thomas_trappenberg_fundamentals_2002}.  Furthermore, the spatial periods in each module, denoted by $\lambda_1,\dots,\lambda_M$, form a discrete set with a relatively constant ratio $3/2$ between adjacent modules. In \cite{banino_vector_based_2018}, the experiment on artificial agents shows similar result on the ratio between module scales. Several works try to demystify the constant spatial scale ratio as the result of minimization the number of neurons required for a given resolution \cite{wei_principle_2015}, or as the prevention of large-scale representation error \cite{stemmler_connecting_2015}.

   In terms of construction, the extension to two-dimensional circular representation is straightforward with the setup of one-dimensional circular representation. However, direct assigning two independent one-dimensional circular representation to $x$- and $y$-axes in Cartesian coordinate system leads to a rectangular grid, rather than hexagonal grid, which appears when two axes evolve along a twisted torus. For simplicity, we consider the Cartesian case here, and leave the twisted torus implementation for further research.

\subsection{Localization with Fully Circular Representation}

   The key of circular representation of position relies on several circular estimates with various spatial scales as explained. To be specific, the circular representation of $x$-axis can be achieved by multiple phases $\phi_{1,t},\dots,\phi_{M,t}$, where each phase is associated with spatial period $\lambda_{1},\dots,\lambda_{M}$, respectively. The $y$-axis position can be represented by $\psi_{1,t},\dots,\psi_{M,t}$ similarly. With the spatial state fully represented in circular form, the estimates of the state, including $\hat{\theta}_t,\hat{\phi}_{1,t},\dots,\hat{\phi}_{M,t}, \hat{\psi}_{1,t},\dots,\hat{\psi}_{M,t}$, now all follow von Mises distribution, whose update relies on Algorithm 1. In addition to the estimate update, the conversion from circular to Cartesian representation is also essential for further application that needs explicit position information. 
   
   As for the localization algorithm with circular representation, since the orientation estimate $\hat{\theta}_t$ is identical to the mixture representation case, we only discuss the position estimates in the following.
   With angular velocity $\omega_t$ and translational velocity $\nu_t$ as input in time update, we can rewrite (\ref{eq:l_time_model}) in circular form as
   \begin{align}
      \phi_{i,t+1} &= \phi_{i,t} + \frac{2 \pi}{\lambda_i} (v_t + n_{v,t})\cos\theta_t \Delta t, \notag\\
      \psi_{i,t+1} &= \psi_{i,t} + \frac{2 \pi}{\lambda_i} (v_t + n_{v,t})\sin\theta_t \Delta t. \label{eq:l_time_model_circular}
   \end{align}
   By characterizing the first two moments of $(v_t + n_{v,t}) \cos\hat{\theta}_t$ as in the previous section, the time update equation can be expressed in the form compatible with (\ref{eq:time_model}) as
   \begin{equation}
      \phi_{i,t+1} = \phi_{i,t} + u_{\phi,i,t} + w_{\phi,i,t},
   \end{equation}
   where 
   \[
      u_{\phi,i,t} = \frac{2 \pi}{\lambda_i} v_t A(\kappa_t)\cos\bar{\theta}_t \Delta t
   \]
   and $w_{\phi,i,t}$ follows $vM(0,\kappa_{w_{\phi,i}})$ with
   \[
      \kappa_{w_{\phi,i}} = \frac{\lambda_i^2}{4\pi^2 (\Delta t)^2 (\sigma_v^2 + v_t^2)}.
   \]
   The procedure can be applied for $y$-axis update of $\hat{\psi}_{1,t},\dots,\hat{\psi}_{M,t}$ as well. Since all spatial states are in circular form, the time update can be done by Algorithm 1 with respective parameters.

   In terms of the direct measurement model (\ref{eq:d_observ_model}), the observation update for $\hat{\theta}_t$ is direct. Those for $\hat{\phi}_{1,t},\dots,\hat{\phi}_{M,t}, \hat{\psi}_{1,t},\dots,\hat{\psi}_{M,t}$ can also be realized with appropriate von Mises approximation. That is, by converting the direct $x$-position observation $o_{x,t}$ in (\ref{eq:d_observ_model}) into circular form as
   \begin{equation}
      o_{\phi,i,t} = \phi_{i,t} + \nu_{\phi,i,t},
   \end{equation}
   where $\nu_{\phi,i,t}$ follows $vM(0, \lambda_i^2/4 \pi^2 \sigma_{o_x}^2)$, the updates of circular represented position updates simply follow. The exact procedure can be applied for $y$-axis position phases for certain. The observation update can be extended to handle bearing and distance measurements with the approximation between Gaussian and von Mises distributions. As for the uncertainty of position estimates, we take the reciprocal of the concentration parameter of the largest spatial scale in the module as the approximated variance.

\begin{algorithm}[t]
  \caption{Localization Algorithm of Fully Circular Representation with Bearing and Distance Measurements}
  \begin{algorithmic}
    \State \textbf{Initialization}
    \State\quad Set $\bar{\theta}_0, \kappa_t$ for orientation estimation.
    \State\quad Set $\bar{\phi}_{0,0},\dots,\bar{\phi}_{M,0}$ and $\kappa_{\phi,0,0}, \dots, \kappa_{\phi,M,0}$ for $x$ position estimation.
    \State\quad Set $\bar{\psi}_{0,0},\dots,\bar{\psi}_{M,0}$ and $\kappa_{\psi,0,0}, \dots, \kappa_{\psi,M,0}$  for $y$ position estimation.
    
    \State \textbf{Time update}
    \State\quad input: odometry input $\omega_t, v_{t}$
    \[
       (\bar{\theta}_{t+1},\kappa_{t+1})= \texttt{vMF\_Time}\left(\bar{\theta}_{t},\kappa_{t};  \omega_t \Delta t, 1/ \sigma_{\omega}^2 \Delta t^2\right).
    \]
    \begin{align*}
       \kappa_{w_{\phi,i}} &= \kappa_{w_{\psi,i}} = \lambda_i^2 / 4\pi^2 (\Delta t)^2 (\sigma_v^2 + v_t^2), \\
       u_{i,t} &= 2 \pi v_t A(\kappa_t)\Delta t / \lambda_i, \\
      (\bar{\phi}_{i,t+1}, \kappa_{\phi,i,t+1}) &= \texttt{vMF\_Time}\left( \cdot, \cdot; u_{i,t}\cos\bar{\theta}_t, \kappa_{w_{\phi,i}} \right),\\
      (\bar{\psi}_{i,t+1}, \kappa_{\psi,i,t+1}) &= \texttt{vMF\_Time}\left(  \cdot, \cdot; u_{i,t}\sin\bar{\theta}_t, \kappa_{w_{\psi,i}} \right).
    \end{align*}

    \State \textbf{Observation update}
    \State\quad readout: $\bar{x}_{t}, \bar{y}_t$ are given by (\ref{eq:readout})
    \State\quad input: bearing measurement $s_{b,t}$ and distance measurement $s_{r,t}$
    \begin{align*}
       \bar{\theta}_{t^+} &= \tan^{-1} \left(\frac{y_l - \bar{y}_t}{x_l - \bar{x}_t} \right) - s_{b,t}, \\
       \kappa_{t^+} &= A\left(\frac{\bar{r}_t s_{r,t}}{2 \sigma^2_{x,t}}\right) A(\nu_b).
    \end{align*}
    \begin{align*}
      o_{\phi,i,t} &= \left[x_l -s_{r,t}A(\kappa_t)A(\kappa_b)\cos(\bar{\theta}_t + s_{b,t}) \right]/ \lambda_i,\\
      o_{\psi,i,t} &= \left[y_l -s_{r,t}A(\kappa_t)A(\kappa_b)\sin(\bar{\theta}_t + s_{b,t}) \right]/ \lambda_i,\\
      (\bar{\phi}_{i,t^+}, \kappa_{\phi,i,t^+}) &= \texttt{vMF\_Obsv}\Big( \cdot, \cdot; o_{\phi,i,t}, \lambda_i^2/4 \pi^2 \sigma_{o_x}^2\Big),\\
      (\bar{\psi}_{i,t^+}, \kappa_{\psi,i,t^+}) &= \texttt{vMF\_Obsv}\Big( \cdot, \cdot; o_{\psi,i,t},\lambda_i^2/4 \pi^2 \sigma_{o_y}^2 \Big).
    \end{align*}
  \end{algorithmic}
\end{algorithm}

   We now turn to the conversion from the fully circular representation to conventional Cartesian coordinate, which is known as readout algorithm and discussed extensively in \cite{bush_using_2015}. To begin with, we take $x$-axis for example. Since each $x$-axis estimate $\hat{\phi}_{i,t}$ gives a most likely position with $\lambda_i$ spatial period, for $i=1,\dots, M$, we then pick the $x$-position estimate $\bar{x}_t$ as the position that maximize the overall likelihood, as
   \begin{equation}
      \bar{x}_t = \argmax_{x \in C_x} \sum_{i=1}^M \kappa_i \cos\left( \frac{2\pi}{\lambda_i} x - \bar{\phi}_{i,t} \right),
      \label{eq:readout}
   \end{equation}
   where $C_x$ indicates the coverage that the circular representation is valid.

\section{Simulation}

   \begin{figure}[t]
      \centering
      \includegraphics[width=0.4\textwidth]{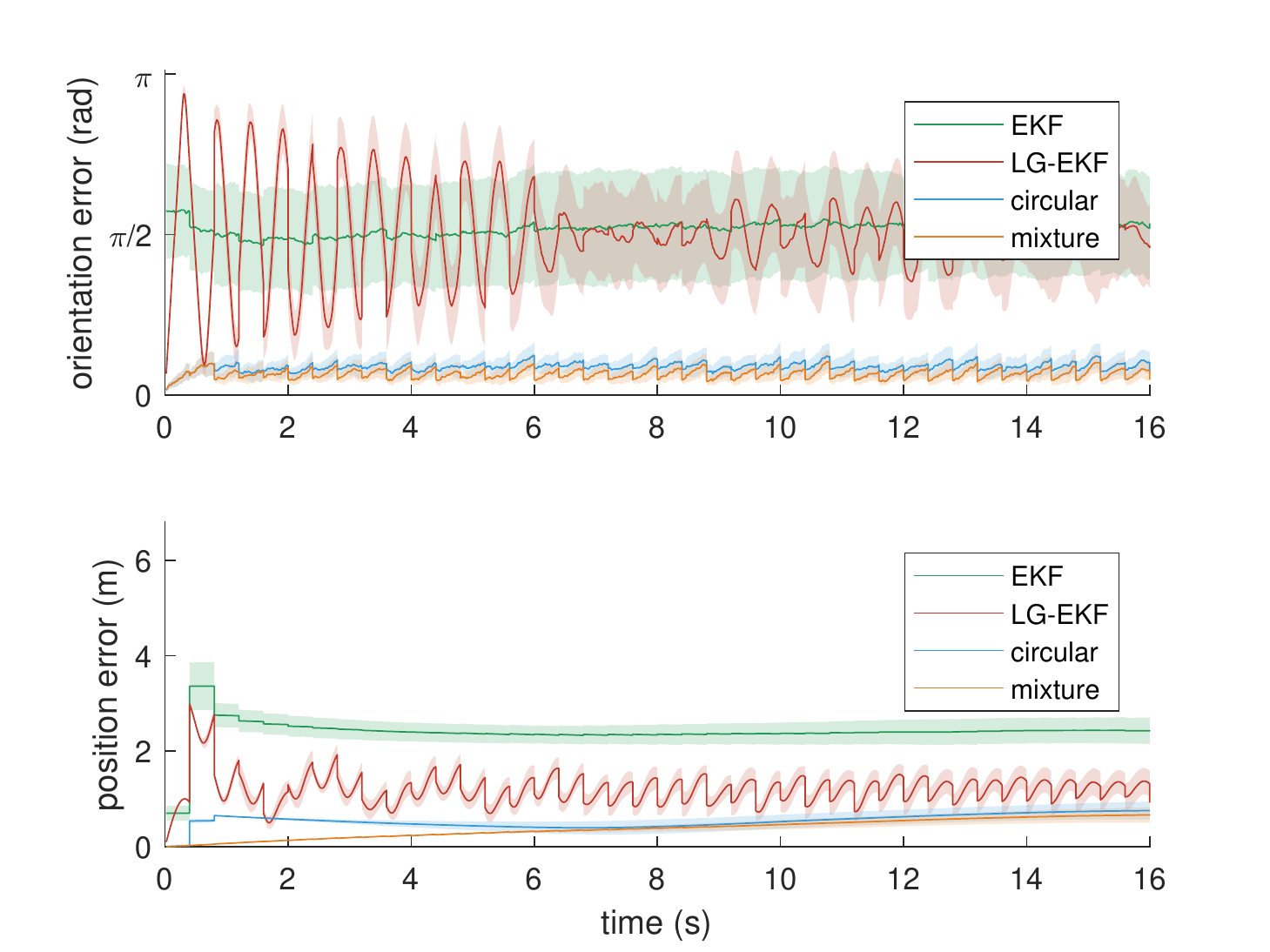}
      \caption{The spatial state estimation errors with bearing and distance measurements over $50$ trials. The shadowed area indicates the region within $1$ standard deviation. EKF and LG-EKF do not maintain accurate orientation estimation, and EKF also shows significant error variation. Even though LG-EKF improves the position estimation over EKF, it still worse than the performance of the proposed algorithms.}
      \label{fig:sim}
   \end{figure}

   We simulate the proposed algorithms, both mixture and fully circular representations, with existing localization algorithms based on EKF and LG-EKF. For fully circular representation, we consider $M=4$ modules for position estimates and the smallest spatial scale of those modules is $2.5$ m, while the scales of the rest modules are multiplied by constant ratio $3/2$. We also set $C_x=C_y=[-5, 5]$. The localization algorithm based on LG-EKF \cite{bourmaud_discrete_2013} considers the state 
   \[
      S_t = \begin{bmatrix}
      \cos\theta_t & -\sin\theta_t & x_t \\
      \sin\theta_t & \cos\theta_t & y_t \\
      0 & 0 & 1
      \end{bmatrix}
      \exp(\epsilon_t) \in SE(2), 
   \]
   where $\epsilon_t \in so(2)$ can be seen as a Lie algebraic error with zero mean.
   For all algorithms, the time update is performed at $50$ Hz. The odometry inputs are constant with $\omega_t=0.2$ rad/s and $v_t=0.1$ m/s, and the noise parameters are chosen as $\sigma_{\omega}^2 \Delta t^2 = 0.004$ and $\sigma_{v}=0.01$ m/s. The agent can observe a landmark situated at $(2,3)$ m at $2.5$ Hz via bearing and distance measurements, whose noise parameters are given by $\kappa_b=500$ and $\sigma_r=0.01$ m.

   
   We plot both the mean of orientation error
   \[
      \tilde{\theta}_t = |\bar{\theta}_t - \theta_t|,
   \]
   and that of the position error
   \[
      \tilde{p}_t = \sqrt{(\bar{x}_t - x)^2 + (\bar{y}_t - y_t)^2}
   \]
   over $50$ trials in Fig. \ref{fig:sim}. The shadowed area indicates the region within $1$ standard deviation over $50$ trials for each algorithms. It is obvious that both orientation estimation errors in EKF and in LG-KEF are severe. The large standard deviation of EKF orientation error shows that EKF may lead to very inaccurate orientation estimate from time to time, which deteriorates the overall spatial state estimate consequently. LG-EKF alleviates the position estimation error comparing to EKF, but the performance is still worse than that of the proposed algorithms. The estimation results of the proposed algorithms, with mixture representation and with fully circular representation, are accurate in both orientation and position, which demonstrates the effectiveness of the circular representation in localization.   
   
\section{Discussion}

   In the proposed algorithms, the correlations between state estimates are not tracked and utilized, which leads to the two-stage observation update with bearing and distance measurements. By derivation, the estimates in the proposed algorithms are conservative in order to avoid over-confidence problem, and the utilization of correlation will further improve the estimation accuracy. In addition, the theoretical analysis of the proposed algorithms, especially the boundedness of uncertainty, is also essential to complete the  understanding of these algorithms. 
   

   This paper aims to enhance the cross-talk between robotics and neuroscience to benefit both fields. For instance, in \cite{fiete_what_2008}, the authors suggest the implementational superiority of circular representation over fixed-based numerical system in terms of carry-free arithmetic property and narrow register range, which is also applicable for robotic design.
   In addition to the similarity of state representation discussed in this paper, we expect more implication from neuroscience to robotics, especially the architecture of autonomous systems. Different from robotic approaches that build autonomous systems from scratch, neuroscientists study those agents that already integrate various intelligent tasks, including localization, map construction, navigation, memory and learning. In particular, on top of the spatial structure represented by grid cells, the environment information is believed to be encoded by place cells in hippocampus. In this sense, place cells provides exteroceptive information as input to grid cells, just as observation update in the proposed algorithm \cite{guanella_model_2007}. Furthermore, with episodic memory mainly stored in hippocampus, place cells support mind-travel to preview places to visit in advance, or the path planning in robotic terminology \cite{sanders_grid_2015}. Neuroscience provides abundant reference for designing robots, and ultimately approaches in robotics and neuroscience should merge into a unified theory to provide and to explain  the sense of space in autonomous agents.

\section{Conclusion}

   EKF is extensively used in robotics to conduct nonlinear estimation with localization as a representative example, but the error stemming from arbitrary linearization shadows its applicability. In this paper, without being confined to Gaussian distribution only, we propose a localization algorithm with mixture representation for spatial state, and are able to characterize the first two moments of the estimates in nonlinear models. We further extend the algorithm to fully circular representation, which is similar to the grid patterns in mammalian brains and in recurrent neural networks. Along with the discovery of grid patterns in distinct autonomous agents, this paper provides a  theoretical perspective to deepen the knowledge of such grid patterns. While this work only deals with 2D localization, we look forward to continuing the investigation to SLAM and to 3D scenarios in the future.

\section*{Appendix I: Properties of $A(\kappa)$}

\begin{lemma}[Modified Bessel functions \cite{laforgia_inequalities_2010}]
   For real $p \geq 0$,
   \begin{equation}
      \frac{-p+\sqrt{p^2+\kappa^2}}{\kappa} < \frac{I_p(\kappa)}{I_{p-1}(\kappa)}.
   \end{equation}
   Also, for $p \geq 1/2$, the inequality $I_p(\kappa)/I_{p-1}(\kappa)<1$ holds.
   \label{lemma:mbf}
\end{lemma}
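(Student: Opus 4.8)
\emph{Proof proposal.} The plan is to recast both inequalities as statements about the single ratio $w(\kappa) := I_p(\kappa)/I_{p-1}(\kappa)$ and to exploit the fact that $w$ solves a first-order Riccati differential equation, against which one can test explicit sub- and super-solutions. First I would derive that equation. Using the standard recurrences $I_p'(\kappa) = I_{p-1}(\kappa) - \frac{p}{\kappa}I_p(\kappa)$ and $I_{p-1}'(\kappa) = I_p(\kappa) + \frac{p-1}{\kappa}I_{p-1}(\kappa)$ (both following from $2I_\nu' = I_{\nu-1}+I_{\nu+1}$ and $I_{\nu-1}-I_{\nu+1} = \frac{2\nu}{\kappa}I_\nu$), the quotient rule gives
\[
   w'(\kappa) = 1 - w(\kappa)^2 - \frac{2p-1}{\kappa}\,w(\kappa).
\]
Since $I_\nu(\kappa)>0$ for $\kappa>0$, the function $w$ is a smooth positive solution on $(0,\infty)$, and the small-argument form $I_\nu(\kappa)\sim (\kappa/2)^\nu/\Gamma(\nu+1)$ yields $w(\kappa)\to 0$ as $\kappa\to 0^+$ whenever $p>0$.

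For the lower bound, set $f(\kappa) = \frac{-p+\sqrt{p^2+\kappa^2}}{\kappa}$ and note that $f$ is exactly the positive root of $x^2 + \frac{2p}{\kappa}x - 1 = 0$, so that $1 - f^2 = \frac{2p}{\kappa}f$. Feeding this identity into the Riccati operator $\mathcal{R}[y] := y' - \bigl(1 - y^2 - \frac{2p-1}{\kappa}y\bigr)$ collapses its algebraic part to $\frac{1}{\kappa}f$, and computing $f'$ gives $\mathcal{R}[f] = -\frac{(\sqrt{p^2+\kappa^2}-p)^2}{\kappa^2\sqrt{p^2+\kappa^2}} < 0$; thus $f$ is a strict subsolution while $\mathcal{R}[w]=0$. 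Writing $e = w - f$, this produces the differential inequality $e' > -\bigl[(w+f)+\frac{2p-1}{\kappa}\bigr]e$. I would then close with a comparison argument: if $e$ vanished at a first point $\kappa_1>0$ then $e'(\kappa_1)\le 0$, contradicting $e'(\kappa_1)>0$ from the inequality; hence $w>f$ on all of $(0,\infty)$, \emph{provided} $e>0$ for small $\kappa$.

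That proviso is the delicate point, because $w$ and $f$ both vanish like $\kappa/(2p)$ as $\kappa\to0^+$, so their ordering is decided only at the next order. Expanding each through order $\kappa^3$ gives $w - f \sim \frac{\kappa^3}{8p^3(p+1)} > 0$, which secures the boundary ordering (the limiting case $p=0$, where $f\equiv 1$ and $w=I_0/I_1\to\infty$, is immediate). For the upper bound with $p\ge 1/2$, I would use the constant $1$ as a barrier: at any $\kappa_0$ with $w(\kappa_0)=1$ the Riccati equation forces $w'(\kappa_0) = -\frac{2p-1}{\kappa_0}\le 0$, so a graph starting from $w(0^+)=0<1$ cannot cross the level $1$ from below. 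This yields $w<1$ strictly for $p>1/2$, and the borderline $p=1/2$ is settled by the explicit evaluation $w = I_{1/2}/I_{-1/2} = \tanh\kappa < 1$.

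The main obstacle is precisely this left-endpoint comparison: since the subsolution $f$ agrees with $w$ to leading order at $\kappa=0$, the maximum-principle step has no margin there, and one genuinely needs the sub-leading coefficient of the series (or an equivalent refined estimate) to certify that $w$ emerges strictly above $f$. Everything else — the Riccati identity, the sign of $\mathcal{R}[f]$, and the barrier for the upper bound — is routine once that ordering is established.
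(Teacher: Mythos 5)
Your proof is correct, but note that the paper itself does not prove this lemma at all: it imports it wholesale from the cited reference \cite{laforgia_inequalities_2010}, so there is no internal proof to compare against, and your blind attempt is in effect supplying what the paper outsources. The published argument of Laforgia--Natalini (following Amos) is purely algebraic: from the recurrence $I_{p-1}(\kappa)-I_{p+1}(\kappa)=\frac{2p}{\kappa}I_p(\kappa)$ and the monotonic decrease of the ratio $I_{p+1}/I_p$ in the order $p$, one gets $\frac{1}{w}<\frac{2p}{\kappa}+w$ for $w=I_p/I_{p-1}$, i.e.\ $w^2+\frac{2p}{\kappa}w-1>0$, and the lower bound is precisely the positive root of that quadratic --- which explains why your $f$ satisfies $1-f^2=\frac{2p}{\kappa}f$ exactly. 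Your route instead treats $w$ as the solution of the Riccati equation $w'=1-w^2-\frac{2p-1}{\kappa}w$ and runs a sub-/supersolution comparison; I verified the key computations: the quotient-rule derivation of the Riccati equation is right, $\mathcal{R}[f]=-\frac{(\sqrt{p^2+\kappa^2}-p)^2}{\kappa^2\sqrt{p^2+\kappa^2}}<0$ is correct, and the expansion $w-f=\frac{\kappa^3}{8p^3(p+1)}+O(\kappa^5)$ checks out, so the delicate left-endpoint ordering you rightly flagged (both $w$ and $f$ vanish like $\kappa/(2p)$) is genuinely settled by your third-order term. The barrier argument at level $1$ for $p>1/2$ and the explicit $\tanh\kappa$ at $p=1/2$ are also sound. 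The algebraic proof buys brevity and no boundary-layer analysis (the quadratic inequality is strict pointwise, with no degenerate comparison at $\kappa=0$); your ODE proof buys self-containedness, needing only the standard recurrences rather than the order-monotonicity of the ratio, which is itself a nontrivial input. Two minor polish points: at $p=0$ your ``immediate'' case deserves one line (the same barrier works, since $w\to\infty$ as $\kappa\to0^+$ and at any touch of level $1$ the Riccati equation forces $w'=\frac{1}{\kappa}>0$, contradicting a first crossing from above), and in the upper-bound argument you should state the first-touch contradiction for $p>1/2$ with the strict sign $w'(\kappa_0)=-\frac{2p-1}{\kappa_0}<0$ rather than $\le 0$.
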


   Theorem \ref{theorem:func_A} is a direct result of the properties of $A(\kappa)$, and can be proved with the following lemma.
\begin{lemma}
   For $\kappa > 0$,
   \begin{enumerate}
      \item $0  < A(\kappa) < 1$,
      \item $0 < A'(\kappa)$.
   \end{enumerate}
   \label{lemma:func_A}
\end{lemma}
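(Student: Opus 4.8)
The plan is to dispatch the two bounds in item (1) directly from Lemma \ref{lemma:mbf}, and to establish the monotonicity in item (2) by recognizing $A$ as the first trigonometric moment of a von Mises law, whose $\kappa$-derivative is a variance.

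For item (1), I would instantiate Lemma \ref{lemma:mbf} at $p=1$. Its first inequality reads $\frac{-1+\sqrt{1+\kappa^2}}{\kappa} < \frac{I_1(\kappa)}{I_0(\kappa)} = A(\kappa)$, and since $\sqrt{1+\kappa^2} > 1$ for $\kappa>0$ the left-hand side is strictly positive, giving $A(\kappa) > 0$. The second assertion of Lemma \ref{lemma:mbf}, valid for $p \geq 1/2$, applied at $p=1$ gives $I_1(\kappa)/I_0(\kappa) < 1$, that is $A(\kappa) < 1$. This settles $0 < A(\kappa) < 1$ with no further work.

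For item (2), the key observation is $I_0'(\kappa) = I_1(\kappa)$, so that $A(\kappa) = I_1/I_0 = \frac{d}{d\kappa}\log I_0(\kappa)$. Writing $Z(\kappa) = 2\pi I_0(\kappa) = \int_0^{2\pi} e^{\kappa\cos\theta}\,d\theta$, differentiation under the integral sign gives $Z'(\kappa) = \int_0^{2\pi}\cos\theta\, e^{\kappa\cos\theta}\,d\theta$ and $Z''(\kappa) = \int_0^{2\pi}\cos^2\theta\, e^{\kappa\cos\theta}\,d\theta$, whence $A(\kappa) = Z'/Z = \E[\cos\Theta]$ and
\[
   A'(\kappa) = \frac{Z''Z - (Z')^2}{Z^2} = \E[\cos^2\Theta] - \big(\E[\cos\Theta]\big)^2 = \var(\cos\Theta),
\]
where $\Theta \sim vM(0,\kappa)$. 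Because the von Mises density is strictly positive on $[0,2\pi)$, the variable $\cos\Theta$ is not almost surely constant, so its variance is strictly positive; hence $A'(\kappa) > 0$. Equivalently, one can reach the same conclusion by the quotient rule together with the recurrences $I_1' = \tfrac12(I_0+I_2)$ and $I_2 = I_0 - \tfrac2\kappa I_1$, which yield the closed form $A'(\kappa) = 1 - A(\kappa)/\kappa - A(\kappa)^2$ and identify the numerator $\tfrac12 I_0(I_0+I_2) - I_1^2$ as $I_0^2\,\var(\cos\Theta)$.

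The step I expect to be the real obstacle is precisely item (2). The tempting purely algebraic route --- derive $A' = 1 - A/\kappa - A^2$ and then bound it using Lemma \ref{lemma:mbf} --- does not close, since that lemma furnishes only a lower bound on $A$ together with the coarse upper estimate $A<1$, whereas $A'>0$ is equivalent to the sharper inequality $A^2 + A/\kappa < 1$, i.e.\ $A(\kappa) < \frac{2\kappa}{1+\sqrt{1+4\kappa^2}}$, which a lower bound cannot deliver. The variance representation circumvents this difficulty by exhibiting $A'$ as a manifestly nonnegative quantity, reducing the whole of item (2) to the non-degeneracy of $\cos\Theta$ under a strictly positive density.
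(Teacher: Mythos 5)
Your proof is correct, and for item (2) it takes a genuinely different route from the paper. For item (1) you do exactly what the paper does: instantiate Lemma \ref{lemma:mbf} at $p=1$ to get the strictly positive lower bound $(\sqrt{\kappa^2+1}-1)/\kappa$ and the bound $A(\kappa)<1$. For item (2), the paper instead quotes the recurrence $A'(\kappa)=1-A(\kappa)\left(A(\kappa)+\frac{1}{\kappa}\right)$ from the directional-statistics literature, i.e.\ (\ref{apdx:derivative}), and asserts that positivity follows ``together with'' the lower bound (\ref{apdx:inequality}). Your observation that this algebraic route does not close as stated is well taken: $1-A^2-A/\kappa>0$ is equivalent to the upper bound $A(\kappa)<2\kappa/(1+\sqrt{1+4\kappa^2})$, and neither the lower bound nor the coarse estimate $A<1$ supplies it (indeed $A(\kappa)$ eventually exceeds thresholds like $\kappa/(\kappa+1)$, so no simple combination of the two stated inequalities suffices). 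Your variance representation --- $A(\kappa)=\E[\cos\Theta]$ for $\Theta\sim vM(0,\kappa)$, hence $A'(\kappa)=\var(\cos\Theta)>0$ because $\cos\Theta$ is non-degenerate under a strictly positive density --- is self-contained, uses only the integral definition of $I_p$ already given in the paper plus differentiation under the integral sign (harmless on a compact domain with smooth integrand), and recovers (\ref{apdx:derivative}) as a by-product via $I_1'=\tfrac12(I_0+I_2)$ and $I_2=I_0-\tfrac{2}{\kappa}I_1$. What the paper's approach buys is brevity and reliance on a known formula; what yours buys is a complete and elementary positivity argument that does not hinge on an unstated Bessel-ratio upper bound, and it would make the appendix argument airtight if substituted in.
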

\begin{proof}
   By Lemma \ref{lemma:mbf}, we have $A(\kappa) < 1$ and for $\kappa> 0$,
   \begin{equation}
      A(\kappa) > \frac{\sqrt{\kappa^2+1}-1}{\kappa} > 0,
      \label{apdx:inequality}
   \end{equation}
   which gives the lower bound of $A(\kappa)$. 
   In \cite{mardia_directional_1999}, we have the recurrent equation
   \begin{equation}
      A'(\kappa) = 1 - A(\kappa) \left(  A(\kappa) + \frac{1}{\kappa}\right).
      \label{apdx:derivative}
   \end{equation}
   Together with (\ref{apdx:inequality}), the inequality for $A'(\kappa)$ is proved.
\end{proof}

   The results from Lemma \ref{lemma:func_A} state that
   $
      A(\kappa_1)A(\kappa_2) < A(\min(\kappa_1,\kappa_2))$,
   which leads to Theorem \ref{theorem:func_A} directly.


\section*{Appendix II: Properties of Positive Definite Matrices}

\begin{lemma}  
   Suppose that $P > 0$ and 
   \[
      P = 
      \begin{bmatrix}
         A_{m \times m} & B\\
         B^{\T}  & C_{n \times n}
      \end{bmatrix},
   \]
   then for $c\in(0,1)$,
   \[ P < P'=
      \begin{bmatrix}
         \frac{1}{c}A & 0\\
         0  & \frac{1}{1-c}C
      \end{bmatrix}.
   \]
   \label{lemma:diagonal}
\end{lemma}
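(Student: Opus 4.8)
The plan is to prove the Loewner-order inequality $P < P'$ by showing that the difference $P' - P$ is positive definite, i.e.\ that $z^{\T}(P'-P)z > 0$ for every nonzero $z = [u^{\T}, v^{\T}]^{\T}$ with $u \in \mathbb{R}^m$ and $v \in \mathbb{R}^n$. First I would expand both quadratic forms. Writing $z^{\T} P z = u^{\T} A u + 2 u^{\T} B v + v^{\T} C v$ and $z^{\T} P' z = \frac{1}{c} u^{\T} A u + \frac{1}{1-c} v^{\T} C v$, the claim reduces to the single scalar inequality
\[
   2 u^{\T} B v < \frac{1-c}{c}\, u^{\T} A u + \frac{c}{1-c}\, v^{\T} C v,
\]
so the whole problem is to control the indefinite cross term $2u^{\T}Bv$ by the two (positive) diagonal contributions.

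The key step is to extract such a bound directly from the hypothesis $P > 0$, rather than from $A$, $B$, $C$ individually. For any $t > 0$, applying positive definiteness to the scaled vector $w = [\,t\,u^{\T}, -t^{-1} v^{\T}\,]^{\T}$ gives $w^{\T} P w = t^2 u^{\T} A u - 2 u^{\T} B v + t^{-2} v^{\T} C v > 0$, hence $2 u^{\T} B v < t^2 u^{\T} A u + t^{-2} v^{\T} C v$ whenever $w \neq 0$. The decisive observation is that choosing $t^2 = (1-c)/c$, which is legitimate precisely because $c \in (0,1)$ makes it positive, turns this into exactly the inequality displayed above. This substitution is really the crux of the argument: it matches the free parameter $t$ to the splitting weight $c$.

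Finally I would tidy up the strictness and the degenerate cases. When both $u$ and $v$ are nonzero, $w$ is nonzero and $P > 0$ gives the strict inequality immediately. When exactly one of them vanishes the cross term is zero while the surviving diagonal term is strictly positive, since $A$ and $C$, being principal submatrices of the positive definite $P$, are themselves positive definite; this covers the remaining nonzero $z$. I expect the only real obstacle to be spotting the correct scaling $t^2 = (1-c)/c$ and confirming that it is admissible; everything else is routine. As an alternative route one could instead compute $P'-P$ as a block matrix with diagonal blocks $\frac{1-c}{c}A$ and $\frac{c}{1-c}C$ and apply the Schur-complement criterion, whose complement simplifies to $\frac{c}{1-c}\big(C - B^{\T} A^{-1} B\big)$, positive definite because $C - B^{\T} A^{-1} B$ is the Schur complement of the positive definite $P$.
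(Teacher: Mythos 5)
Your argument is correct and is essentially the paper's proof in quadratic-form language: the test vector $w=[t\,u^{\T},\,-t^{-1}v^{\T}]^{\T}$ with $t^2=(1-c)/c$ is, up to a positive scalar, exactly the paper's reweighted vector $[(1-c)x^{\T},\,c\,y^{\T}]^{\T}$ combined with its signature congruence $\mathrm{diag}(I_m,-I_n)$, so both proofs amount to evaluating $P>0$ at the same sign-flipped, $c$-weighted vector. Your Schur-complement aside is also a valid (mildly different) route, but the main argument matches the paper's.
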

\begin{proof}
   Consider a nonzero vector $u^{\T} =[ x^{\T}, \, y^{\T} ] $ where $x \in \mathbb{R}^m$ and $y \in \mathbb{R}^n$. We can construct $v^{\T} =[(1-c) x^{\T}, \, c y^{\T} ] $. Since $P > 0$, $v^{\T}P v > 0$, or $u^{\T} Q u >0$, where
   \[ Q=
      \begin{bmatrix}
         (1-c)^2 A & c(1-c)B\\
         c(1-c)B^{\T}  & c^2 C
      \end{bmatrix} > 0.
   \]
   The result then follows by
   \begin{align*}
      P' - P &=
      \begin{bmatrix}
         \frac{1-c}{c} A & -B\\
         -B^{\T} & \frac{c}{1-c} C
      \end{bmatrix}    \\  
      &= \frac{1}{c(1-c)}
      \begin{bmatrix}
         I_{m}  & 0\\
         0 & -I_{n}
      \end{bmatrix}
      Q
      \begin{bmatrix}
         I_{m}  & 0\\
         0 & -I_{n}
      \end{bmatrix}      > 0.
    \end{align*}
\end{proof}



\bibliographystyle{IEEEtran}
{\footnotesize \bibliography{ref}}

\end{document}